\newcommand{\node}{t}
\newcommand{\ord}{d}
\newcommand{\BlackBox}{\rule{1.5ex}{1.5ex}}  % end of proof
\newtheorem{theorem}{Theorem}
\newtheorem{proposition}[theorem]{Proposition}
\newtheorem{definition}[theorem]{Definition}
\newcommand\blfootnote[1]{%
  \begingroup
  \renewcommand\thefootnote{}\footnote{#1}%
  \addtocounter{footnote}{-1}%
  \endgroup
}
\begin{document}

% If your paper is accepted and the title of your paper is very long,
% the style will print as headings an error message. Use the following
% command to supply a shorter title of your paper so that it can be
% used as headings.
%
%\runningtitle{I use this title instead because the last one was very long}

% If your paper is accepted and the number of authors is large, the
% style will print as headings an error message. Use the following
% command to supply a shorter version of the authors names so that
% they can be used as headings (for example, use only the surnames)
%
%\runningauthor{Surname 1, Surname 2, Surname 3, ...., Surname n}

\twocolumn[

\aistatstitle{Deep Non-Crossing Quantiles through the Partial Derivative}

\aistatsauthor{ Axel Brando \And Joan Gimeno \And {\footnotesize Jos\'e A. Rodr\'iguez-Serrano} \And Jordi Vitri\`a }

\aistatsaddress{{UB$^{1}$ \& BSC$^{2}$} \And UB$^{1}$ \And BBVA$^{3}$ \And UB$^{1}$} ] 

\begin{abstract}
  Quantile Regression (QR) provides a way to approximate a single conditional quantile. To have a more informative description of the conditional distribution, QR can be merged with deep learning techniques to simultaneously estimate multiple quantiles. However, the minimisation of the QR-loss function does not guarantee non-crossing quantiles, which affects the validity of such predictions and introduces a critical issue in certain scenarios. In this article, we propose a generic deep learning algorithm for predicting an arbitrary number of quantiles that ensures the quantile monotonicity constraint up to the machine precision and maintains its modelling performance with respect to alternative models. The presented method is evaluated over several real-world datasets obtaining state-of-the-art results as well as showing that it scales to large-size data sets.
\end{abstract}

\section{INTRODUCTION}\label{sec:introduction}

\begin{figure}[ht]
\centering
 \includegraphics[scale=.6]{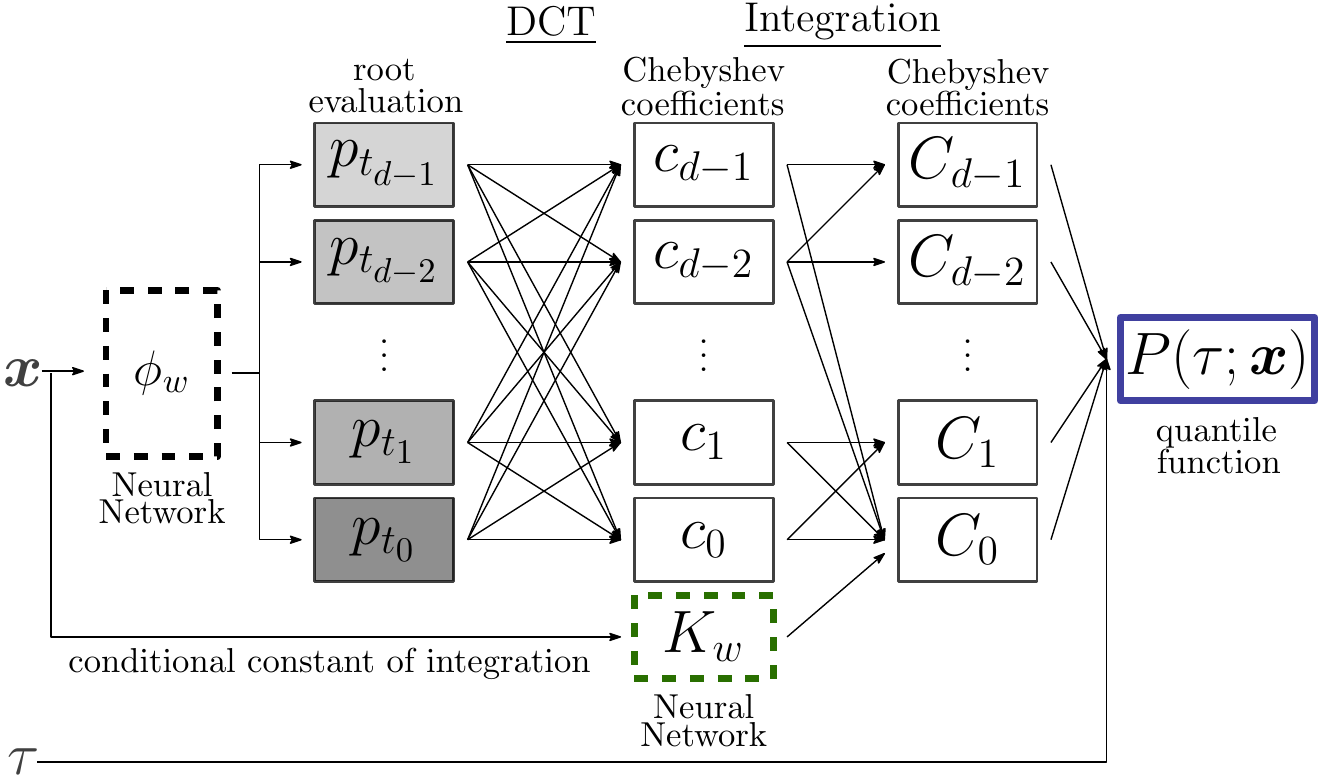}
 \includegraphics[scale=.43]{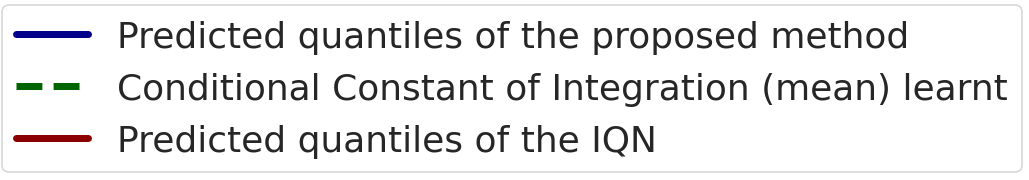}
 \includegraphics[scale=.46]{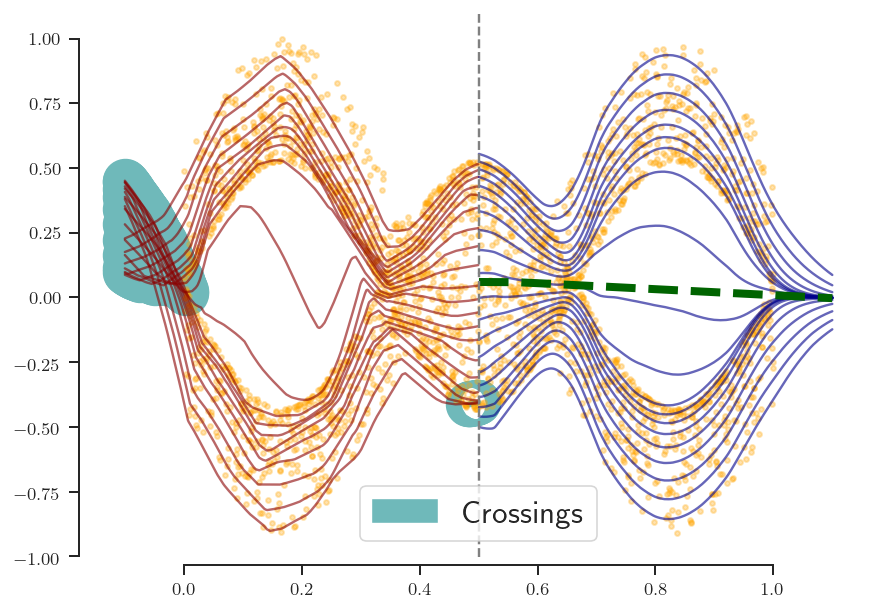}

\caption{{\small Synthetic data set showing that our proposed method (with the above architecture) avoids crossing quantiles (right), unlike IQN (left).}}
\label{fig:InitialFigure}
\end{figure}

Quantile Regression (QR) allows us to approximate a desired quantile -- unlike the classical regression that only estimates the mean or the median -- of the conditional distribution $p (Y \mid \boldsymbol{X})$. This is useful since we can capture confidence intervals without making strong assumptions about the distribution function to approximate. The formal definition of QR is: \blfootnote{${}^1$ Universitat de Barcelona (UB).} \blfootnote{${}^2$ Barcelona Supercomputing Center (BSC).} \blfootnote{${}^3$ BBVA AI Factory (BBVA).}

\begin{definition}[Conditional quantile regression]
Let $\boldsymbol{X} \in \mathbb{R}^D$ and $Y \in \mathbb{R}$ be respectively a covariate and a response random variable. Given $\tau$ in the real interval $[0,1]$, the conditional quantile regression (QR) consists in finding a function $q_\tau \colon \mathbb{R}^D \to \mathbb{R}$ which approximates the $\tau$-th quantile of $p (Y \mid \boldsymbol{X})$ by minimising the $\tau$-th quantile regression loss function defined as
\begin{multline}
 \label{eq:CQR}
%  \begin{split}
 \mathcal{L}_{QR}(\boldsymbol{X}, Y, \tau) = \mathbb{E} \Big[ 
 \Big( Y - q_\tau(\boldsymbol{X}) \Big) \cdot \\
 \Big(\tau - \mathds{1}{[Y<q_\tau(\boldsymbol{X})]} \Big) \Big],
%  \end{split}
\end{multline}
where $\mathds{1}{\left[c\right]}$ denotes the indicator function that verifies the condition $c$.
\end{definition}

\begin{figure*}[t!]
\centering
\includegraphics[width=0.9\linewidth]{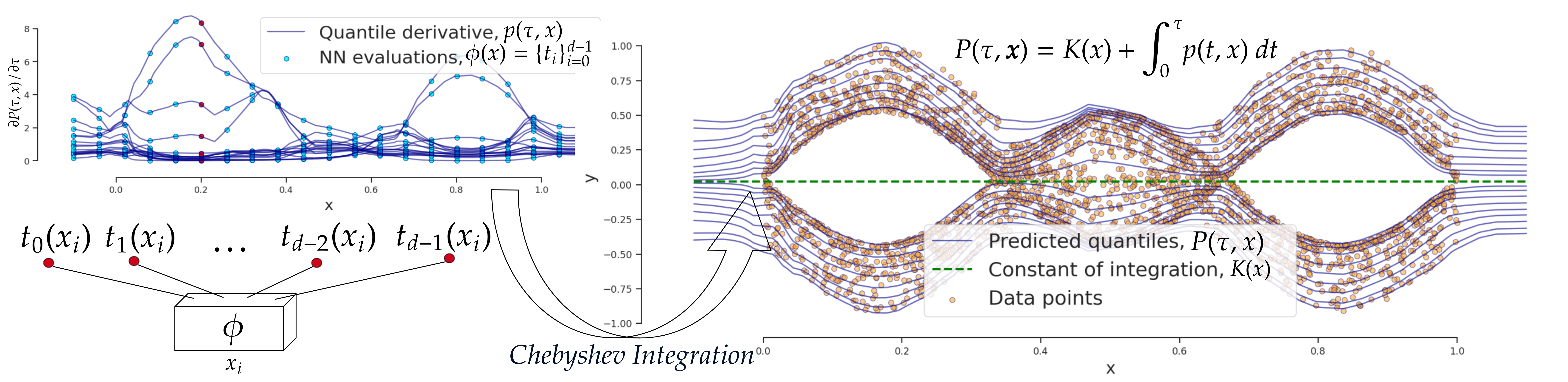}%
\caption{ In general terms, the proposed method uses a NN - $\phi$ at the bottom left - to generate $d$ \textbf{positive values} --the red $\{t_j(\boldsymbol{x})\}_{j=0}^{d-1}$ points--, which will be used as roots for computing the coefficients of a Chebyshev polynomial, represented in the top left subfigure. This polynomial will approximate the partial derivative of the quantile function. To do that, we integrate it obtaining a new Chebyshev polynomial, the one in the right subfigure. Thus, for each $\boldsymbol{x}$ we have a Chebyshev polynomial modelling all the quantiles.}
\label{fig:illustration}
\end{figure*}

Generically, the loss function in Eq.~\ref{eq:CQR} is asymmetric, convex, and it penalises overestimation errors with weight $\tau$ and underestimation errors with weight $1-\tau$.

Based on this loss function, QR models such as quantile regression forests \citep{meinshausen2006quantile}, gradient boosted quantile regression models \citep{zhang2018parallel} or even neural network models \citep{dabney2018distributional} can be designed to approximate a discrete set of quantiles. 

In all these cases the set of values of each $\tau$ is fixed a priori. Thus, the number of quantiles to approximate in a QR scenario constitutes a hyperparameter of the model definition and does not allow to approximate the full quantile function.

Alternatively, following \citet{dabney2018implicit, tagasovska2019single, brando2019modelling}, if the models are trained stochastically (such as the neural networks which use stochastic gradient descent), they can be defined to learn implicitly the quantile function, i.e. where the quantile to predict is an input parameter, $\tau$. As \citet{dabney2018implicit} states, this approach allows to approximate any conditional distribution given sufficient model capacity. In that case, we can extend Eq.~\ref{eq:CQR} to learning the full quantile distribution as follows:

\begin{definition}[Conditional quantile function]
A function $\Phi_w\colon [0,1]\times \mathbb{R}^D \rightarrow \mathbb{R}$ with parameters $w$ approximates the quantile function when it minimises the conditional quantile regression loss function defined as 
\begin{multline}
 \label{eq:CQF}
%  \begin{split}
 \mathcal{L}(\boldsymbol{X},Y)= \mathbb{E} \Bigg[ \int_0^1 \Big(Y-\Phi _w(\tau,\boldsymbol{X}) \Big) \cdot \\ 
 \Big(\tau-\mathds{1}{[Y<\Phi _w(\tau,\boldsymbol{X})]}\Big) \, d\tau \Bigg],
%  \end{split}
\end{multline}
where $\mathds{1}{\left[c\right]}$ denotes the indicator function that verifies the condition $c$.
\end{definition}
The integral in Eq.~\ref{eq:CQF} is numerically hard to compute because the analytical expression of $\Phi _w$ is, generically, not known. Following \citet{brando2019modelling}, we can apply a Monte-Carlo strategy to provide a feasible loss function. This is based on considering a uniform random variable $\tau \sim \mathcal{U}(0,1)$ and for each evaluation of the loss function in Eq.~\ref{eq:CQF}, a $\tau$-sample set, $\{ \tau _t \} _{t=1}^{N_\tau}$, of $N _\tau$ points is generated in each training iteration such that 
\begin{multline}
 \label{eq:IQRLoss}
 \mathcal{L}(\boldsymbol{X},Y) \approx  \mathbb{E} \Bigg[ \frac{1}{N_{\tau}}\overset{N_{\tau}}{\underset{t=1}{\sum}} \Big(Y-\Phi_w(\tau_{t},\boldsymbol{X})\Big)\cdot\\  \Big(\tau_{t}-\mathds{1}{[Y<\Phi_w(\tau_{t},\boldsymbol{X})]}\Big) \Bigg].
\end{multline}

As we highlighted, the $\Phi _w$ requires to be a model that can be trained by using a Monte-Carlo approach. Hence, neural networks models will be suitable.

The modellization of quantile functions (shown in Eq.~\ref{eq:CQF}) constitutes an important goal, especially when the distribution to be predicted, $p(Y\mid \boldsymbol{X})$, is complex or when we are interested to impose the minimum assumptions about the shape of the distribution. A common practice is to consider an exponential power distribution (e.g. the Normal or Laplace distributions). In such cases, it is assumed unimodality, symmetry, and lose critical information regarding the shape of the distribution. 

Importantly, by building a quantile function we are able to approximate - in a discrete manner but with arbitrary precision - the distribution $p(Y\mid \boldsymbol{X})$. Typically, quantiles are used to build confidence intervals. However, when these confidence intervals are tight or when we want to recover the distribution shape from the discretization, we might face with the critical problem described below.

Since the different predicted quantile values are estimated individually, they may not be ordered according to their quantile value, $\tau$. For instance, the predicted quantile $0.1$ may not be a value greater than the quantile $0.05$. As a consequence, we obtain an non-valid distribution of the response variable. That phenomenon is known as the \emph{crossing quantile} phenomenon \citep{koenker2017handbook}: 

\begin{definition}[Crossing quantile phenomenon]
Let $f \colon [0,1] \times \mathbb{R}^D \rightarrow \mathbb{R}$ be a conditional quantile function that predicts a certain quantile of a random variable $y\in \mathbb{R}$ given $\boldsymbol{x}\in \mathbb{R}^D$. If there exist $\tau_1,\tau_2\in [0,1]$ being $\tau _1 < \tau _2$ and $\boldsymbol{x} \in \mathbb{R}^D$ such that 
\[
f(\tau_1,\boldsymbol{x}) > f(\tau_2,\boldsymbol{x}),
\]
then $f$ suffers the \emph{crossing quantile} phenomenon.
\end{definition}

Different solutions have been proposed to overcome this phenomenon. Most of them are based on adding a penalisation term to regularise the optimisation process and ``encourage'' the model to reduce the number of crossing quantiles \citet{koenker2001quantile, bondell2010noncrossing, tagasovska2018frequentist}. Nevertheless, since the model is not restricted to be {\bf partially monotonic} - i.e. monotonic with respect to $\tau$ and not over the other inputs - some quantiles may still cross.

\section{IMPOSING A POSITIVE PARTIAL DERIVATIVE}\label{sec:imposepositive}

Following the idea of partial monotonicity, we want to estimate a partial derivative and use its integral as a quantile function. 

\begin{definition}[Partial Derivative of the Conditional Quantile Function] \label{def:PDCQF}
Let $\Phi _{w} \colon [0,1] \times \mathbb{R}^D \to \mathbb{R}$ be a quantile function with parameters $w$ and let $\phi _w \colon [0,1] \times \mathbb{R}^D \to \mathbb{R} _+$ and $K _w \colon \mathbb{R} ^D \to \mathbb{R}$ be two functions with parameters $w$. If we assume that 
\begin{equation} \label{eq.Phiw}
    \Phi_{w}(\tau, \boldsymbol{x}) = K_w(\boldsymbol{x}) +  \int _0^\tau \phi_w(t, \boldsymbol{x})\, dt,
\end{equation}
then the conditional quantile function can be approximated by minimizing the following loss function,
{\small\begin{multline}
 \label{eq:DCQF}
%  \begin{split}
 \mathcal{L}(\boldsymbol{X},Y)=  \mathbb{E} \Bigg[ \int_0^1  \left(Y - \Big( K_w(\boldsymbol{X}) +  \int _0^\tau \phi_w(t, \boldsymbol{X})\, dt \Big) \right) \cdot \\
  \left(\tau-\mathds{1}{\left[Y< \Big( K_w(\boldsymbol{X}) +  \int _0^\tau \phi_w(t, \boldsymbol{X})\, dt \Big)\right]} \right) \, d\tau \Bigg],
%  \end{split}
\end{multline}}
where $\mathds{1}{\left[c\right]}$ denotes the indicator function that verifies the condition $c$.
\end{definition}

Applying the approach in Definition~\ref{def:PDCQF}, we are modelling the partial derivative with respect to the quantile function, i.e. $\partial \Phi(\tau, \boldsymbol{x}) / \partial \tau =: \phi _w(\tau, \boldsymbol{x})$. Thus, we can impose conditions to that derivative. In our case, we want to impose that it will always be strictly positive to ensure the partial monotonic property of the predicted quantile function, i.e. $\forall \tau_1, \tau_2 \in [0,1]$, if~$\tau_1 < \tau_2$ then $\Phi(\tau_1, \boldsymbol{x}) < \Phi(\tau_2, \boldsymbol{x})$.

\subsection{Model Definition}

Following the Definition~\ref{def:PDCQF}, the main contribution of this article considers a conditional quantile function $\Phi _w\colon [0,1] \times \mathbb{R}^D \rightarrow \mathbb{R}$ that it is obtained by integrating its partial derivative $\phi _w$ and considering the constant of integration, $K _w(\boldsymbol{x})$. Generically speaking, $K _w(\boldsymbol{x})$ will manage the modelling of the conditional mean or the quantile $0$ of $p(Y\mid \boldsymbol{X})$ and $\phi _w(\tau,\boldsymbol{x})$ will approximate the rest of the $p(Y\mid \boldsymbol{X})$ distribution as quantiles of a new $q(Y\mid \boldsymbol{X})$.

To ensure the partial monotonicity of $\Phi _w$, a neural network $\phi _w \colon \mathbb{R}^D \rightarrow \mathbb{R}^{+}\times\overset{\ord}{\cdots}\times\mathbb{R}^{+}$ with $\ord$ positive outputs (as it is shown in Figure~\ref{fig:InitialFigure}) will be considered as the partial derivative of $\Phi _w$. Thus, the neural network $\phi _w$ approximates the derivative of the final quantile function with respect to the quantile $\tau$. 

The integration process of $\phi _w$ will be done by using truncated Chebyshev polynomial expansion of order $\ord$. Due to the truncation, we consider a finite mesh of quantile values, called \emph{Chebyshev roots} or \emph{roots}, $\{t_k\} _{k = 0}^{\ord-1} \subset [0,1]$ which are defined as
\begin{equation}
 \label{eq.nodes-values}
 \node _k = \frac{1}{2}\cos \biggl(\frac{\pi (k + \frac{1}{2})}{\ord}\biggr) +
 \frac{1}{2}, \quad 0 \leq k < \ord.
\end{equation} 
These Chebyshev roots only depend on the degree $\ord$ and not on the quantile input. Therefore, the roots are fixed once the degree is chosen\footnote{The non-dependency of the quantile value will be crucial to ensure the monotonicity of the global predicted quantile function up to the machine precision.}.

The truncated Chebyshev expansion consists in expressing a function as a linear combination of Chebyshev polynomials. These polynomials are defined as mappings $T _k \colon [-1,1] \rightarrow \mathbb{R}$ given by the recurrent formula
\begin{equation}
\label{eq.cheb-poly}
    \begin{split}
        T _0(t) &\coloneq 1, \\
        T _1(t) &\coloneq t, \\
        T _{k +1} (t) &\coloneq 2 t T _k(t) - T _{k-1}(t), \quad k \geq 1.
    \end{split}
\end{equation}
In our case, $\tau$ is in $[0,1]$ rather than in $[-1,1]$ which slightly modifies the standard polynomial definition and allows us to approximate $\phi _w (\tau, \boldsymbol{x})$ by 
\begin{equation}
 \label{eq.poly-p}
    p (\tau, \boldsymbol{x}; \ord) \coloneq \frac{1}{2}c _0(\boldsymbol{x}) +
 \sum _{k= 1}^{\ord -1} c _k(\boldsymbol{x}) T _k (2\tau-1).
\end{equation}
Like the roots, the coefficients $c _j(\boldsymbol{x})$ in Eq.~\ref{eq.cheb-poly} are independent of the quantiles. These quantities are computed for $j = 0, \dotsc, \ord -1$, by %\footnote{In contrast to the UPMNN, as shown in Eq.~\ref{eq.nodes-values}.}
\begin{equation}
\label{eq.smallc}
\begin{split}
    c _j(\boldsymbol{x}) \coloneq \frac{2}{\ord} \sum _{k = 0}^{\ord -1} \phi _w(t _k, \boldsymbol{x}) \cos \biggl( &\frac{j \pi (k + \frac{1}{2})}{\ord}\biggr), 
\end{split}
\end{equation}
Eq.~\ref{eq.smallc} is merely a matrix-vector product from the matrix of cosines and the vector of $\phi _w(t _k, \boldsymbol{x})$ that instead of a complexity $\Theta(\ord^2)$, it can be computed in \mbox{logarithmic} complexity, $\Theta(\ord \log \ord)$.

The polynomials $T _k$ in Eq.~\ref{eq.poly-p} do not need to be explicitly computed and, by construction of the coefficients $c _k(\boldsymbol{x})$ in Eq.~\ref{eq.smallc}, $p(\node _k, \boldsymbol{x};d)$ is ``equal to'' $\phi _w(\node _k, \boldsymbol{x})$ for all the Chebyshev roots, $\{t_k\} _{k = 0}^{\ord-1}$, in Eq.~\ref{eq.nodes-values}. Note that $\phi _w(t _k, \boldsymbol{x})$ are denoted by $p _{t _k}$ in Figure~\ref{fig:InitialFigure}. That equality must, in practice, be understood in terms of machine precision of the
numerical representation system, classically $\sim 10^{-16}$ in
double-precision or $\sim 10^{-8}$ in single-precision
arithmetic. The root evaluation step is illustrated in Figure~\ref{fig:InitialFigure} and its values are denoted as $p _{t _k}$.

Once $p(t,\boldsymbol{x}; \ord)$ has been encoded by its $c_j(\boldsymbol{x})$ coefficients in Eq.~\ref{eq.smallc}, we can easily compute its integral function $\int _0 ^\tau p(t, \boldsymbol{x}; \ord) \, dt$ denoted by $P(\tau, \boldsymbol{x}; \ord)$. Thus $P$ will be an approximation of the integral of the neural network, $\phi _w$. That is,
\begin{equation}
    \label{eq.P-Phi}
    P(\tau, \boldsymbol{x}; \ord) \approx \Phi _w(\tau, \boldsymbol{x}) = K _w(\boldsymbol{x}) + \int _0^\tau \phi _w(t, \boldsymbol{x})\, dt.
\end{equation}
Additionally, given that we imposed that the neural network $\phi _w(t,\boldsymbol{x})$ gives only positive values for all $t \in [0,1]$, then $P(\tau,
\boldsymbol{x}; \ord)$ would be an increasing function with respect to
$\tau$ as long as $\ord$ is large enough. 

In the above procedure, the integral of $\phi _w$ in Eq.~\ref{eq.P-Phi} is, in general, not straightforward to compute. 
However, the neural network $\phi _w(\cdot, \boldsymbol{x})$ is globally being represented by $p(\cdot, \boldsymbol{x}; \ord)$ on the quantile interval $[0,1]$. Therefore, by using its Chebyshev coefficients $c_k(\boldsymbol{x})$, we can encode the integral of $P$ to provide its Chebyshev coefficients $C_k(\boldsymbol{x})$. In fact, according
to \citet{Clenshaw1955}, the integral of $p(\tau, \boldsymbol{x}; \ord)$ gives another Chebyshev
expansion, say $P(\tau, \boldsymbol{x}; \ord)$, with coefficients
$C_k(\boldsymbol{x})$, i.e.,
\begin{equation}
 \label{eq.poly-P} 
 P(\tau, \boldsymbol{x}; \ord) = \frac{1}{2}C _0(\boldsymbol{x}) +
 \sum _{k = 1}^{\ord -1} C _k(\boldsymbol{x}) T _k (2\tau-1).
\end{equation}
To deduce the expressions for $C _k(\boldsymbol{x})$ in Eq.~\ref{eq.poly-P}, we need to recurrently integrate the
polynomials $T _k(t)$, whose integral are 
\begin{equation}
\begin{split}
 \int T _0 (t) \,dt &= T _1(t) + \text{constant}, \hspace{0.5cm} 
 \\
 \int T _1 (t) \,dt &= \frac{T _2(t) + T _0(t)}{4} + \text{constant}, \\ 
 \int T _k (t) \,dt &= \frac{T _{k-1}(t)}{2(k-1)}  - \frac{T 
_{k+1}(t)}{2(k+1)}  + \text{constant}, \quad k \geq 2. \label{eq.integralT}
\end{split}
\end{equation}
By ordering the coefficients $c _j(\boldsymbol{x})$ in Eq.~\ref{eq.poly-p} in terms of $C _j(\boldsymbol{x})$ of Eq.~\ref{eq.poly-P}, we deduce that
\begin{equation}
\label{eq.coeffsC}
\begin{split}
 & C _k(\boldsymbol{x}) \coloneq \frac{c _{k-1}(\boldsymbol{x}) - c
   _{k+1}(\boldsymbol{x})}{4 k}, \quad 0 < k < \ord -1,\\ 
 & C _{d-1}(\boldsymbol{x}) \coloneq \frac{c
   _{d-2}(\boldsymbol{x})}{4(d-1)},
\end{split}
\end{equation}
and $C _0(\boldsymbol{x})$ depends on the constant of integration
$K_w(\boldsymbol{x})$ in Eq.~\ref{eq.P-Phi} and the other coefficient
values in Eq.~\ref{eq.poly-P}.

On the whole, the proposed method consisting of the following steps: for each $\boldsymbol{x}$ value,
\begin{enumerate}
    \item A set of values, $\{ p_{t_k}(\boldsymbol{x})\}_{k=0}^{\ord-1}$ is obtained via the neural network $\phi _w$ at the corresponding roots, $\{ t_k \}_{k=0}^{\ord-1}$.
    \item These are transformed to $\ord$ coefficients, $\{ c_{k}(\boldsymbol{x})\}_{k=0}^{\ord-1}$ using a fast matrix-vector multiplication algorithm. This results in a truncated Chebyshev polynomial, $p(\tau,\boldsymbol{x};\ord)$.
    \item This polynomial is then integrated to obtain another Chebyshev polynomial, $P(\tau,\boldsymbol{x};\ord)$, whose coefficients are $\{ C_{k}(\boldsymbol{x})\}_{k=0}^{\ord-1}$ in Eq.~\ref{eq.coeffsC}.
\end{enumerate}
The final output is a Chebyshev polynomial for each $\boldsymbol{x}$ value that approximates the unknown conditional quantiles of $p(Y \mid \boldsymbol{X})$.

\subsection{Selecting the Constant of Integration}\label{subsec:ConstantChoice}

The formula used to calculate the constant of integration $C _0(\boldsymbol{x})$ (shown in Eq.~\ref{eq.coeffsC}) will depend on which statistic we choose for $K(\boldsymbol{x})$, as shown in the following propositions. 

\begin{proposition}[Our-$q _0$]
\label{prop:chenetq0}
Let $(P, K)$ be the proposed model and $q(Y\mid \boldsymbol{X})$ the distribution that $P$ produces as quantiles. If $K$ is the \textbf{lowest quantile} ($\tau=0$) of $q(Y\mid \boldsymbol{X})$, then the $C _0$ coefficient of the Chebyshev polynomial $P$ verifies:
\begin{equation}\label{CheNet:C0}
    C _0(\boldsymbol{x}) = 2 K(\boldsymbol{x}) - 2 \sum
    _{k=1}^{\ord-1} C _k(\boldsymbol{x}) (-1)^k.
\end{equation}
%We will denote this as \emph{CheNet-$q _0$}.
\end{proposition}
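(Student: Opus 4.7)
The plan is to evaluate the integrated Chebyshev expansion $P(\tau, \boldsymbol{x}; \ord)$ at the endpoint $\tau = 0$ and match it against the defining property of $K(\boldsymbol{x})$ as the $\tau = 0$ quantile of $q(Y\mid\boldsymbol{X})$.

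First I would use Eq.~\ref{eq.P-Phi}, namely $P(\tau,\boldsymbol{x};\ord) \approx K_w(\boldsymbol{x}) + \int_0^\tau \phi_w(t,\boldsymbol{x})\,dt$. Setting $\tau = 0$, the integral term vanishes, so $P(0,\boldsymbol{x};\ord) = K(\boldsymbol{x})$. Under the hypothesis, $K(\boldsymbol{x})$ is the lowest quantile of $q(Y\mid\boldsymbol{X})$, which is precisely the value of $P$ at $\tau = 0$, so this identification is consistent.

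Next I would evaluate the Chebyshev expansion in Eq.~\ref{eq.poly-P} at $\tau = 0$. Since the argument to the polynomials $T_k$ is $2\tau - 1$, this corresponds to $T_k(-1)$. A standard identity for Chebyshev polynomials of the first kind (provable by induction on the recurrence in Eq.~\ref{eq.cheb-poly}: $T_0(-1) = 1$, $T_1(-1) = -1$, and $T_{k+1}(-1) = 2(-1)T_k(-1) - T_{k-1}(-1) = (-1)^{k+1}$) gives $T_k(-1) = (-1)^k$. Substituting yields
\begin{equation*}
K(\boldsymbol{x}) = P(0,\boldsymbol{x};\ord) = \frac{1}{2} C_0(\boldsymbol{x}) + \sum_{k=1}^{\ord-1} C_k(\boldsymbol{x}) (-1)^k.
\end{equation*}

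Solving this linear equation for $C_0(\boldsymbol{x})$ produces exactly Eq.~\ref{CheNet:C0}. The only potentially nontrivial step is the Chebyshev evaluation $T_k(-1) = (-1)^k$, but that is a one-line induction on the three-term recurrence, so I do not foresee any real obstacle; the entire argument is essentially a boundary-value calculation.
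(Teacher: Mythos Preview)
Your proposal is correct and follows essentially the same approach as the paper: impose the boundary condition $P(0,\boldsymbol{x};\ord)=K(\boldsymbol{x})$, use $T_k(-1)=(-1)^k$ from the recurrence in Eq.~\ref{eq.cheb-poly}, substitute into Eq.~\ref{eq.poly-P} at $\tau=0$, and solve for $C_0(\boldsymbol{x})$. The only difference is that you spell out the one-line induction for $T_k(-1)$, which the paper leaves implicit.
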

\begin{proof}
The learning process is subjected to the condition that the lowest quantile value, i.e. $\tau = 0$, must be the $K$. That is, $P(0, \boldsymbol{x}; d) = K(\boldsymbol{x})$.
By taking the value $t = -1$ in Eq.~\ref{eq.cheb-poly}, we derive that $T _k(-1) = (-1)^{k}$. Therefore, using Eq.~\ref{eq.poly-P} with $\tau=0$, we obtain Eq.~\ref{CheNet:C0}.
\end{proof}

\begin{proposition}[Our-Mean]
Let $(P, K)$ be the proposed model and $q(Y\mid \boldsymbol{X})$ the distribution that $P$ produces as quantiles. If $K$ is the \textbf{mean} of $q(Y\mid \boldsymbol{X})$, then the $C _0$ coefficient of the Chebyshev polynomial $P$ verifies:
\begin{equation}\label{CheNet:Mean}
    C _0(\boldsymbol{x}) = 2K(\boldsymbol{x}) - 2\sum _{\substack{k = 1 \\ k \text{ odd}}}^{d-1}
    \frac{C _k(\boldsymbol{x})}{k^2 - 4}.
\end{equation}
%We will denote this as \emph{CheNet-Mean}.
\end{proposition}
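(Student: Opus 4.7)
The plan is to mirror the proof of Proposition~\ref{prop:chenetq0}, but to replace the pointwise constraint at $\tau=0$ by the integral constraint that characterises the mean. The starting observation is the standard identity: for any real-valued random variable whose quantile function is $Q$, one has $\mathbb{E}[Y]=\int_0^1 Q(\tau)\,d\tau$. Applied to $q(Y\mid \boldsymbol{X})$, whose quantile function is $P(\cdot,\boldsymbol{x};d)$, this yields
\begin{equation*}
K(\boldsymbol{x}) \;=\; \int_0^1 P(\tau,\boldsymbol{x};d)\,d\tau,
\end{equation*}
which plays the role that $P(0,\boldsymbol{x};d)=K(\boldsymbol{x})$ played in Proposition~\ref{prop:chenetq0}.

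Next, I would substitute the Chebyshev expansion from Eq.~\ref{eq.poly-P} and interchange sum and integral to obtain $K(\boldsymbol{x}) = \tfrac{1}{2}C_0(\boldsymbol{x}) + \sum_{k=1}^{d-1} C_k(\boldsymbol{x})\int_0^1 T_k(2\tau-1)\,d\tau$. Each of the remaining integrals reduces, via the change of variables $u=2\tau-1$, to $\tfrac{1}{2}\int_{-1}^1 T_k(u)\,du$. These can be evaluated in closed form by applying the primitives recalled in Eq.~\ref{eq.integralT} together with the endpoint values $T_j(\pm1)=(\pm1)^j$; the parity of $k$ plays the decisive role, so that one family of indices contributes nothing (by antisymmetry of $T_k$ on $[-1,1]$) while the other produces a rational factor. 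Once these contributions are tabulated, isolating $C_0(\boldsymbol{x})$ by multiplying through by $2$ produces Eq.~\ref{CheNet:Mean}.

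The main obstacle is the careful parity bookkeeping and simplification of the rational expressions coming out of Eq.~\ref{eq.integralT}: one must identify which indices produce nonvanishing contributions and manipulate the resulting telescoping pieces so that they collapse into the closed-form denominator appearing in the statement. Once those integrals are correctly tabulated, the remainder of the argument is pure algebra, structurally identical to the short proof of Proposition~\ref{prop:chenetq0}, and no further analytic input is needed.
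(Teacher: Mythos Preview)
Your outline diverges from the paper at the very first step, and that divergence prevents you from reaching Eq.~\ref{CheNet:Mean}. You impose the constraint
\[
K(\boldsymbol{x})=\int_0^1 P(\tau,\boldsymbol{x};d)\,d\tau,
\]
whereas the paper's proof imposes
\[
K(\boldsymbol{x})=\int_0^1 \tau\,P(\tau,\boldsymbol{x};d)\,d\tau.
\]
These are different conditions, and they select opposite parities. Under your constraint, after the substitution $u=2\tau-1$ you are left with $\tfrac{1}{2}\int_{-1}^{1}T_k(u)\,du$, which vanishes for \emph{odd} $k$ and equals $1/(1-k^2)$ for even $k\ge 2$. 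Carrying your computation through therefore yields
\[
C_0(\boldsymbol{x})=2K(\boldsymbol{x})+2\sum_{\substack{k=2\\ k\text{ even}}}^{d-1}\frac{C_k(\boldsymbol{x})}{k^2-1},
\]
with even indices and denominator $k^2-1$, not the odd indices and denominator $k^2-4$ appearing in the statement. So the parity bookkeeping you flagged as ``the main obstacle'' would in fact come out the wrong way.

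To reproduce the paper's formula you must start from $\int_0^1 \tau P\,d\tau=K$. After the change of variable $t=2\tau-1$ this reduces (up to the $k=0$ term) to evaluating $\int_{-1}^{1} t\,T_k(t)\,dt$. Now the integrand $t\,T_k(t)$ is odd for even $k$, so it is the \emph{even} indices that drop out. For odd $k$ the paper uses the recurrence $2t\,T_k=T_{k+1}+T_{k-1}$ together with Eq.~\ref{eq.integralT} to obtain $\int_{-1}^{1} t\,T_k(t)\,dt = 2/(k^2-4)$, and isolating $C_0$ then gives exactly Eq.~\ref{CheNet:Mean}. Your general strategy (expand, change variables, integrate Chebyshev polynomials, isolate $C_0$) is fine; the missing ingredient is the extra factor of $\tau$ in the defining constraint and the consequent use of the three-term recurrence before applying Eq.~\ref{eq.integralT}.
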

\begin{proof}
Taking into account the definition of the (continous) mean, the condition we must impose is 
\begin{equation}
    \int _0 ^1 \tau P (\tau, \boldsymbol{x}; d) \, d\tau = K(\boldsymbol{x}).
\end{equation}
Then, taking into account the linearity of the integral and after a change of coordinates ($t = 2\tau -1$), the integral is reduced to compute the mean of the Chebyshev polynomials. That is, 
 \begin{equation}
  \label{eq.integralmeanT}
     \int _{-1}^1 t T_k(t) \, dt.
 \end{equation}
Taking into account the symmetries of the Chebyshev polynomials, we obtain the equality
 \begin{equation*}
     \int _{-1}^1 t T_k(t) \, dt = (1 + (-1)^{k+1}) \int _{0}^1 t T _k(t)\, dt.
 \end{equation*}
Then if $k$ is even, Eq.~\ref{eq.integralmeanT} is zero. If $k$ is now assumed to be an odd integer, then taking into account the recurrent definition of the Chebyshev polynomials, $2 T _k(t) = T _{k+1}(t) + T _{k-1}(t)$ and then Eq.~\ref{eq.integralT}, 
 \begin{equation*}
   \int _{-1}^1 t T_k(t) \, dt = \frac{T _{k-2}(t)}{2(k-2)} - \frac{T
     _{k+2}(t)}{2(k+2)} \biggr|_{0}^1 = \frac{2}{k^2-4}.
 \end{equation*}
 From here, we can recover the expression in Eq.~\ref{CheNet:Mean}.
\end{proof}

\subsection{Rate of Convergence of the Chebyshev Expansion and Machine Precision} \label{subsect:Rate}

In Chebyshev series theory \citep{Trefethen2008,Majidian2017}, if a function $\phi _w$ is of class $C ^1$ in $[-1,1]$, then its Chebyshev expansion converges absolutely and uniformly to $\phi _w$ in $[-1,1]$. In case that $\phi _w$ is in $C^k([-1,1])$, then its Chebyshev coefficients $c _\ell$ verify that $| c _\ell | = o (1/\ell^k)$. Moreover, if $\phi _w$ is now analytic, its Chebyshev expansion will have an exponential rate, i.e. $|c _ \ell | = o(\exp(- \rho \ell))$ for some $\rho > 0$ denoting the radius strip in the complex plane whose strip contains all the Chebyshev coefficients $c _ \ell$. Therefore, smoother mappings are going to require less Cheybshev coefficients. To have a kind of measure about the accuracy of the approximation $p\approx\phi$, one needs to check the decay rate of the Chebyshev coefficients $c _\ell$. Most of the times, it will be enough to monitor the last two coefficients $c _ {\ord -1} (\boldsymbol{x})$, and $c _ {\ord -2} (\boldsymbol{x})$ in absolute value.

Due to the discretisation in the root mesh $\{ t _k \} _{k=0}^{\ord -1}$, which acts as a linear transformation, the evaluation of $P(t _k; \boldsymbol{x})$ will have roundoff error (i.e. machine precision) with respect to the integral of $\int _0^ {t _k} \phi _w(t; \boldsymbol{x}) \, dt$ and for the other values not in the mesh, its (absolute) error will be bounded by the aforementioned convergence rate.

\subsection{Ensure Monotonicity for all Quantiles}\label{subsec:Mono}

There is one last detail to be totally confident that the function $P(\tau; \boldsymbol{x}) = f(\tau, \boldsymbol{x})$ is monotonic with respect to $\tau$. It is required to consider that $p$ is an approximation of $\phi _w$, i.e. $p(\tau; \boldsymbol{x}) \approx \phi _w(\tau; \boldsymbol{x})$. Thus, although the $\phi _w$ function has been forced to be strictly positive, the approached Chebyshev polynomial $p$ may not be. According to Section~\ref{subsect:Rate}, we are certain that for the roots values, $\{t_k \} _{k=0}^{\ord-1}$, the Chebyshev estimation has a negligible error. However, in the middle points between the roots it is important to be careful. 

Once the appropriate degree is known, the model with that degree can be considered a monotonic function. However, it is important to note that although the error will be small, the possibility of $P$ ceasing of being monotonic exists in all non-root points. In case that the order must be increase to ensure the monotonicity, the number of parameters to learn will increase as well.

\subsection{Calculation Procedure of Coefficients}

To evaluate Equation \ref{eq.poly-p} or Equation \ref{eq.poly-P} for a value $\tau$ in $[0,1]$, the Clenshaw's method \citep{Clenshaw1955} or its stable numerical error version \citep{Elliott1968,Newbery1974} works. Let us briefly summarise the evaluation at $\tau$ of $p(\tau; \boldsymbol{x})$ in  Equation \ref{eq.poly-p} (or $P(\tau; \boldsymbol{x})$ in Equation \ref{eq.poly-P}), whose numerical complexity is $\Theta(\ord)$.

\begin{algorithm}[!htb]
\caption{Evaluation of a Chebyshev sum at a given $\tau\in[0,1]$. In particular, useful for Eqs.~\ref{eq.poly-p} and \ref{eq.poly-P}.}
\label{algo.eval_cheb}
\begin{algorithmic}[1]
\Procedure{eval\_cheb}{$\tau$, $c _0(\boldsymbol{x}), \dotsc, c _{\ord -1}(\boldsymbol{x})$}
\State $d _1(\boldsymbol{x}) \gets d _2(\boldsymbol{x}) \gets d _3(\boldsymbol{x}) \gets 0$.
\State $\sigma \gets 2 \tau -1$.
\For {$k = \ord -1, \ord-2, \dotsc, 1$}
\State $d _3(\boldsymbol{x}) \gets d _1(\boldsymbol{x})$.
\State $d _1(\boldsymbol{x}) \gets 2 \sigma d _1(\boldsymbol{x}) - d _2(\boldsymbol{x}) + c _k(\boldsymbol{x})$.
\State $d _2(\boldsymbol{x}) \gets d _3(\boldsymbol{x})$.
\EndFor
 \Return $\sigma d _1(\boldsymbol{x}) - d _2(\boldsymbol{x}) + 0.5 c _0(\boldsymbol{x})$.
\EndProcedure
\end{algorithmic}
\end{algorithm}

\algnewcommand{\LineComment}[1]{\State \(\triangleright\) #1}

\begin{algorithm}[!htb]
\floatname{algorithm}{Prerequisites}
\caption{Definitions and functions used for next Algorithms.}
\label{algo:Prerequisites}
\begin{algorithmic}
\LineComment{$\boldsymbol{x}$ has batch size and number of features as shape, i.e. $\left[ bs, D\right]$.}
\LineComment{\texttt{RS}$\left( \; tensor, shape\right)$: reshape $tensor$ to $shape$. }
\LineComment{\texttt{RP}$\left(tensor, n\right)$: repeats $n$ times the last dimension of $tensor$.}
\LineComment{\texttt{CC}$\left( T_1, T_2\right)$: concatenate $T_1$ and $T_2$ by using their last dimension. }
%\LineComment{LEN$\left( T_1\right)$: number of elements in $T_1$. }
\end{algorithmic}
\end{algorithm}

\begin{algorithm}[!htb]
\caption{Chebyshev coefficients of the integral of a non-negative neural network.}
\label{algo.coeff_cheb}
\begin{algorithmic}[1]
\Procedure{cheb\_cs}{$\boldsymbol{x}, \ord, \phi _w, K $}
\State $\{ o_k(\boldsymbol{x}) \}_{k=0}^{\ord -1} \gets \phi _w{\left( \boldsymbol{x} \right)}$\Comment{Apply any NN, $\phi _w$.}
\State $\{ c_k(\boldsymbol{x}) \}_{k=0}^{\ord -1} \gets \text{DCT-II}(\boldsymbol{o},\ord)$\Comment{Eq.~\ref{eq.smallc}}
\State $\{ C_k(\boldsymbol{x}) \}_{k=1}^{\ord -1} \gets$ Integration step wrt $\{ c_k(\boldsymbol{x}) \}_{k=0}^{\ord -1}$
\State $C_0(\boldsymbol{x}) \gets 2 K (\boldsymbol{x}) - 2 \sum _{k=1}^{\ord-1} C _k(\boldsymbol{x}) (-1)^k$ %\Comment{Eq\ref{ChePAN:C0}}
\Return $\{ C_k(\boldsymbol{x}) \}_{k=0}^{\ord -1}, \{ c_k(\boldsymbol{x}) \}_{k=0}^{\ord -1}$.
\EndProcedure
\end{algorithmic}
\end{algorithm}

\subsection{Chebyshev Polynomial as a Framework} \label{sub:framework}

\begin{algorithm}[!htb]
\caption{How to build the model by using any deep learning architecture for regression.}
\label{algo.build_chepan}
\begin{algorithmic}[1]
\Procedure{build\_model\_graph}{$\boldsymbol{x}, y, \ord, \phi , K, N_\tau$}
\LineComment{$N_\tau$ is the number of non-roots to evaluate.}
\State $\{ C_k(\boldsymbol{x}) \}_{k=0}^{\ord -1}, \{ c_k(\boldsymbol{x}) \}_{k=0}^{\ord -1} \gets \text{\texttt{CHEB\_CS}}(\boldsymbol{x}, \ord, \phi , K)$
\State $\boldsymbol{\tau} \gets \mathcal{U}{\left(0,1\right)}$ \Comment{$\boldsymbol{\tau}$ must has $\left[bs \cdot N_\tau, 1\right]$ shape.}
\State $\boldsymbol{o}_{P} \gets \text{\texttt{EVAL\_CHEB}}(\boldsymbol{\tau}, C _0(\boldsymbol{x}), \dotsc, C _{\ord -1}(\boldsymbol{x}))$
\State $\mathcal{L} \gets \left(y-\boldsymbol{o}_{P}\right)\cdot\left(\boldsymbol{\tau}-\mathds{1}{\left[y<\boldsymbol{o}_{P}\right]}\right)$\Comment{\text{Eq. \ref{eq:CQR}} loss.}
\State \textbf{return} $\mathcal{L}$
\EndProcedure
\end{algorithmic}
\end{algorithm}

On the whole, the proposed method can be seen as a framework to use any deep learning architecture to build a partial monotonic function. This monotonic function with respect some of the input variables could be applied to solve, with guarantees, the crossing quantile phenomenon of conditional quantile regression models. The implementation of this framework applying to quantile regression is shown in Algorithm~\ref{algo.build_chepan} and could be done with any automatic differentiation library such as TensorFlow \citep{abadi2016tensorflow} or PyTorch \citep{paszke2017automatic}. Additionally, this implementation takes care of performing the matrix-vector product efficiently by using the DCT-II referred previously. %The source code used to reproduce the results can be found in the Github repository%\footnote{The camera-ready version of this paper will include the source codes to reproduce the experiments.}.

\subsection{The Desired Quantile as an Input}

\begin{algorithm}[!htb]
\caption{How to evaluate the model for any quantile $\tau\in\left[0,1\right]$ desired to obtain $P(\tau; \boldsymbol{x})$ and $p(\tau; \boldsymbol{x})$.}
\label{algo.eval_chepan}
\begin{algorithmic}[1]
\Procedure{eval\_model}{$\boldsymbol{x}, \ord, \phi _w, K, \tau$}
\State $\{ C_k(\boldsymbol{x}) \}_{k=0}^{\ord -1}, \{ c_k(\boldsymbol{x}) \}_{k=0}^{\ord -1} \gets \text{\texttt{CHEB\_CS}}(\boldsymbol{x}, \ord, \phi _w, K)$
\State $\boldsymbol{o}_{p} \gets \text{\texttt{EVAL\_CHEB}}(\tau, c _0(\boldsymbol{x}), \dotsc, c _{\ord -1}(\boldsymbol{x}))$
\State $\boldsymbol{o}_{P} \gets \text{\texttt{EVAL\_CHEB}}(\tau, C _0(\boldsymbol{x}), \dotsc, C _{\ord -1}(\boldsymbol{x}))$
\State \textbf{return} $\boldsymbol{o}_{P}, \boldsymbol{o}_{p}$
\EndProcedure
\end{algorithmic}
\end{algorithm}

It is important to highlight that even if the number of $\phi _w$ outputs is fixed in the proposed model, we are able to estimate any quantile as it is show in Algorithm~\ref{algo.eval_chepan}.

\subsection{Inverse Mapping}

For all $\boldsymbol{x} \in \mathbb{R}^{D}$, when the predicted function is monotonic, then $P(\cdot ; \boldsymbol{x})$ is bijective: Given $y\in\mathbb{R}$ in the image of a monotone $P(\cdot; \boldsymbol{x})$, we can get the unique quantile $\tau$, for some fixed $\boldsymbol{x}$ and $y$, such that $P(\tau; \boldsymbol{x}) = y$. To obtain $\tau^{\ast} \approx P^{–1}(y, \boldsymbol{x})$, we can proceed by an iterative scheme given by

\begin{equation}
 \tau _{n+1} = \tau _n - p(\tau _n, \boldsymbol{x})^{-1} \bigl( P(\tau _n; \boldsymbol{x}) - y \bigr), n \geq 1,
\end{equation}

where the value $p(\tau _n, \boldsymbol{x})$ as well as $P(\tau _n; \boldsymbol{x})$ can be calculated by Equation \ref{eq.poly-p}. 

\section{ABLATION STUDY}

To clarify the proposed method, in the following section we will propose a quasi non-crossing quantile model (not up to machine precision), which also estimates the partial derivative and the closest - as far as we know - literature model, but it cannot applicable as a quantile function estimator because it does not provide a uniform global behaviour in $[0,1]$.

\subsection{Non-Strictly Positive Partial Derivative}

Here we propose a slight variation of the proposed model, which we will denote as \textit{Not Always Monotonic} model (NAM).

Rather than considering the interval of roots fixed, which implies to fix the roots values, an alternative manner to compute the Eq.~\ref{def:PDCQF} integral can be to compute the Clenshaw-Curtis formula over the interval $[0,\tau]$. In that case, that formula, considering the quantile input, consists in the following steps:

\begin{enumerate}
\item First, we fix an \emph{even} integer $\ord$, called degree, and we define the so-called nodes depending on $\tau$:
\begin{equation}
 \label{eq.nodes-values-II}
 \bar \node _k^\ord(\tau) \coloneq  \frac{\tau}{2}\cos \biggl( \frac{\pi k}{\ord} \biggr) +
 \frac{\tau}{2}, \quad 0 \leq k \leq \ord.
\end{equation} 
These nodes have the property that $\bar \node _k^{2\ord}(\tau) = \bar \node _{k/2}^{\ord}(\tau)$, which means that half of them can be reused when the value of $\ord$ is doubled. 

\item Second, we compute the quantities $ \bar c _j (\tau, \boldsymbol{x})$ defined for $0 \leq j \leq \ord$ as 
\begin{equation}
\label{eq.smallbarc}
% \begin{split}
    \bar c _j (\tau, \boldsymbol{x}) \coloneq  \sum _{k = 0}^\ord \phi _w(\bar t _k^\ord(\tau), \boldsymbol{x}) \cos \biggl(\frac{j \pi k}{\ord}\biggr), 
% \end{split}
\end{equation}
which is just a matrix-vector multiplication. Here, we can use an algorithm, such as the Discrete Cosine Transform of type 1 (DCT-I), which performs the matrix-vector multiplication in Eq.~\ref{eq.smallbarc} with a complexity $\Theta(\ord \log \ord)$ rather than a standard $\Theta(\ord^2)$ procedure. In general, this algorithm produces the unnormalized coefficients defined in Eq.~\ref{eq.smallbarc}. To normalize them with respect to the degree, we must apply a factor, for instance, $2/\ord$. Thus, let us redefine the quantities 
\[
 \bar c _j (\tau, \boldsymbol{x}) \gets\frac{2}{\ord} \bar c _j(\tau, \boldsymbol{x}) . 
\]
\item Finally, $\Phi _w(\tau ; \boldsymbol{x})$ in Def.~\ref{def:PDCQF} is approximated by $P(\tau; \boldsymbol{x}, \ord)$ such that $P(0, \boldsymbol{x}; \ord) = \Phi _w(0;\boldsymbol{x})$, i.e. the constant of integration in Def.~\ref{def:PDCQF} is the conditional quantile $\tau=0$. All of the above leads us to the final Clenshaw-Curtis expression used in the NAM,
\begin{equation}
\label{eq.clenshaw-curtis}
  P(\tau, \boldsymbol{x};d) = \tau \biggl( \frac{\bar c
    _0(\tau, \boldsymbol{x})}{2}  - \sum _{k = 1}^{ d /2}
  \frac{\bar c _{2k}(\tau, \boldsymbol{x})}{4k^2 - 1} \biggl) + \Phi _w(0;\boldsymbol{x}).
\end{equation}
\end{enumerate}

Note that Eq.~\ref{eq.clenshaw-curtis} has a $\tau$ dependency on all the coefficients of $\bar c _{k}(\tau, \boldsymbol{x})$, which comes from the fact that the $P$ depends on $\tau$, because the nodes in Eq.~\ref{eq.nodes-values-II} also originally depend on the quantile $\tau$. This dependency will avoid to have a certain values of quantiles while ours methods can ensure - in machine precision - that the crossing quantile phenomenon does not appear because it approximates $\phi _w$ independently on $\tau$.

\subsection{Related Work}

Building generic monotonic functions poses an important problem in several areas \citep{archer1993application,sill1998monotonic,daniels2010monotone,gupta2016monotonic,you2017deep,wang2020monotonic}. In the econometric field there are works such as \citep{KoenkerN05}, which precisely explore the incorporation of monotonicity constraints and the estimation of the whole quantile function at once. However, as \citep{FeldmanBR21} states, these models, such as the VQR in \citep{Carlier2020}, assume linearity, which in some contexts is a strong assumption. Differently, in the current approach, we are estimating the partial derivative of the quantile function by means of a truncated Chebyshev polynomial. Moreover, the proposed technique allows us to guarantee that the resulting quantile function is always partial monotonous in the desired quantiles even our proposal is using internally a deep learning model. Therefore, our proposal goes beyond modelling the conditional quantile function and not only constitutes a crossing quantile phenomenon solution but it allows us to estimate the partial derivative of the quantile function by means of a neural network.

\begin{table*}[t!]
\setlength\tabcolsep{1.5pt}
\caption{
Minimum and maximum, [min, max], of the crossing quantile numbers over all the test folds proposed in \citet{Hern}.
* denotes the number of crossing quantiles evaluating the roots quantiles.
  }
\begin{center}
\scalebox{0.85}{%0.85
%\hskip-2.0cm
\begin{tabular}{r|c|c|c|c|c|c|c|c|c|c|c}
\multicolumn{1}{r}{}
& \multicolumn{1}{c}{\hfil Housing}
& \multicolumn{1}{c}{\hfil Concrete}
& \multicolumn{1}{c}{\hfil Energy}
& \multicolumn{1}{c}{\hfil Kin8nm}
& \multicolumn{1}{c}{\hfil Naval}
& \multicolumn{1}{c}{\hfil Power}
& \multicolumn{1}{c}{\hfil Protein}
& \multicolumn{1}{c}{\hfil Wine}
& \multicolumn{1}{c}{\hfil Yacht}\\ \cline{2-10}
N & $\boldsymbol{[0,0]}$ & $\boldsymbol{[0,0]}$ & $\boldsymbol{[0,0]}$ & $\boldsymbol{[0,0]}$& $\boldsymbol{[0,0]}$ & $\boldsymbol{[0,0]}$& $\boldsymbol{[0,0]}$ & $\boldsymbol{[0,0]}$& $\boldsymbol{[0,0]}$  \\
IQN & $\boldsymbol{[0,0]}$ & $[0, 1416]$ & $[75, 3007]$ & $[0, 834]$ & $[1673, 104689]$ & $[0, 4058]$ & $[57554,87096]$ & $\boldsymbol{[0,0]}$ & $[281, 5935]$ \\
IQN-P & $\boldsymbol{[0,0]}$ & $\boldsymbol{[0,0]}$ & $[11, 2583]$ & $[0, 782]$ & $ [1072,123594]$ & $[0,2539]$ & $[42461,77603]$ & $\boldsymbol{[0,0]}$ & $ [22,2852]$\\
IQN-D & $[0,55]$ & $[0,1541]$ & $[733,8259]$ & $[0, 3959]$ & $[6246, 263553]$ & $[175, 72813]$ & $[68210,113867]$ & $ [0,2131]$ & $[931,7632]$ \\
PCDN & $\boldsymbol{[0,0]}$ & $\boldsymbol{[0,0]}$ & $\boldsymbol{[0,0]}$ & $\boldsymbol{[0,0]}$& $\boldsymbol{[0,0]}$ & $\boldsymbol{[0,0]}$& $\boldsymbol{[0,0]}$ & $\boldsymbol{[0,0]}$& $\boldsymbol{[0,0]}$  \\
NAM & $\boldsymbol{[0,0]}$ & $[0,1201]$ & $[2152, 22760]$ & $\boldsymbol{[0,0]}$ & $ [0,0]$ & $ [0,433]$ & $[2090,11409]$ & $\boldsymbol{[0,0]}$ & $[150, 2630]$ \\
Ours-$q_0$  & $\boldsymbol{[0,0]}^{*}$ & $\boldsymbol{[0,0]}^{*}$ & $\boldsymbol{[0,0]}^{*}$ & $\boldsymbol{[0,0]}^{*}$& $\boldsymbol{[0,0]}^{*}$ & $\boldsymbol{[0,0]}^{*}$& $\boldsymbol{[0,0]}^{*}$ & $\boldsymbol{[0,0]}^{*}$ & $\boldsymbol{[0,0]}^{*}$\\ 
Ours-Mean & $\boldsymbol{[0,0]}^{*}$ & $\boldsymbol{[0,0]}^{*}$ & $\boldsymbol{[0,0]}^{*}$ & $\boldsymbol{[0,0]}^{*}$& $\boldsymbol{[0,0]}^{*}$ & $\boldsymbol{[0,0]}^{*}$& $\boldsymbol{[0,0]}^{*}$ & $\boldsymbol{[0,0]}^{*}$ & $\boldsymbol{[0,0]}^{*}$
\end{tabular}}
\end{center}
\label{tab:CCs}
\end{table*}
\begin{table*}[t]\vspace{-.5em}
\setlength\tabcolsep{1.5pt}
\caption{ Comparison of the Log-Likelihood sum of the test set over all of the train-test folds proposed in \citet{Hern} for all the QR-based models. The bigger the better.}
\begin{center}
\scalebox{0.68}{
\begin{tabular}{r|l|l|l|l|l|l|l|l|l|l}
\multicolumn{1}{r}{}
& \multicolumn{1}{l}{\hfil Housing}
& \multicolumn{1}{l}{\hfil Concrete}
& \multicolumn{1}{l}{\hfil Energy}
& \multicolumn{1}{l}{\hfil Kin8nm}
& \multicolumn{1}{l}{\hfil Naval}
& \multicolumn{1}{l}{\hfil Power}
& \multicolumn{1}{l}{\hfil Protein}
& \multicolumn{1}{l}{\hfil Wine}\\ \cline{2-9}
IQN & $ -703.2\pm 228.4$ & $ -1589.0\pm 228.3$ & $ -1092.7\pm 127.9$ & $ -9819.8\pm 398.4$ & $ -6358.5\pm 1343.0$ & $ -6499.3\pm 633.4$ & $ -27923.5\pm 614.1$ & $ -2040.5\pm 277.3$ \\
IQN-P & $ -613.2\pm 164.3$ & $ -1202.6\pm 157.1$ & $ -1012.3\pm 106.5$ & $ -9579.4\pm 699.5$ & $ -7868.2\pm 2072.1$ & $ -6320.3\pm 819.7$ & $ -29960.0\pm 2204.8$ & $ -2094.6\pm 318.8$ \\
IQN-D & $ -1156.7\pm 178.7$ & $ -2515.7\pm 192.4$ & $ -1839.4\pm 156.0$ & $ -18766.6\pm 1116.2$ & $ -18329.8\pm 5180.0$ & $ -18449.6\pm 1000.6$ & $ -90558.1\pm 5334.5$ & $ -3990.9\pm 309.9$ \\
PCDN & $ -745.4\pm 139.7$ & $ -1398.7\pm 226.4$ & $ -1133.6\pm 141.8$ & $ -9688.2\pm 540.5$ & $ -5664.5\pm 128.6$ & $ -8317.0\pm 343.6$ & $ -36599.9\pm 2961.3$ & $ -1898.7\pm 253.1$ \\
NAM & $ -697.4\pm 216.0$ & $ -1571.2\pm 202.7$ & $ -1461.7\pm 122.3$ & $ -9749.8\pm 483.5$ & $ -5372.5\pm 93.1$ & $ -8503.4\pm 250.5$ & $ -42807.7\pm 3063.3$ & $ -1928.7\pm 239.2$ \\
Ours-$q_0$ & $ \boldsymbol{-421.9\pm 114.2}$ & $\boldsymbol{-852.8\pm 148.9}$ & $\boldsymbol{-722.3\pm 112.4}$ & $\boldsymbol{-5355.8\pm 305.3}$ & $ \boldsymbol{-4033.0\pm 139.8}$ & $\boldsymbol{-4521.1\pm 155.9}$ & $ \boldsymbol{-25600.9\pm 725.1}$ & $\boldsymbol{-1201.6\pm 157.4}$\\
Ours-Mean & $ \boldsymbol{-407.9\pm 111.9}$ & $ \boldsymbol{-808.7\pm 122.8}$ & $ \boldsymbol{-736.4\pm 103.7}$ & $ \boldsymbol{-5309.6\pm 346.4}$ & $ \boldsymbol{-4033.5\pm 102.5}$ & $ \boldsymbol{-4494.5\pm 145.8}$ & $ \boldsymbol{-25596.0\pm 924.6}$ & $ \boldsymbol{-1239.1\pm 154.2}$%\\
\end{tabular}}
\end{center}
\label{tab:UCIlogs}
\end{table*}

The closest neural network method in the literature to the proposed method is the following. Differently, this method approximates the non-partially derivative, which in consequence cannot be applied to approximate a quantile function as we desire, but it has several similarities in the way the derivative is approximated as we will explain hereafter. 

Generically, this method is a deep learning approach to building a monotonic function $H\colon \mathbb{R}^D \rightarrow \mathbb{R}$, called the Unconstrained Monotonic Neural Network (UMNN) proposed in \citep{Wehenkel2019}. The UMNN estimates the derivative of that function as

%Following, we briefly describe how the closest neural network method in the literature can approximate the non-partially derivative, which in consequence cannot be applied to approximate a quantile function.

%Recently, a deep learning approach to building a monotonic function $H\colon \mathbb{R}^D \rightarrow \mathbb{R}$, called the Unconstrained Monotonic Neural Network (UMNN), was proposed in \citep{Wehenkel2019}. The UMNN estimates the derivative of that function as
\begin{equation}
    \label{eq.integ-UMNN}
    H(z) = H(0) + \int _0^z h(t)\, dt,
\end{equation}
where the integral in Eq.~\ref{eq.integ-UMNN} is approximated using the Clenshaw-Curtis quadrature \citep{ClenshawC1960}.

Furthermore, this derivative can be approximated by means of a neural network $\hat{h} \colon \mathbb{R}^D \rightarrow \mathbb{R}_{+}$, whose output is restricted to strictly positive values. Therefore, when $h(z) \coloneq \frac{\partial H}{\partial z}(z) > 0$, the $H(z)$ behaves as a monotone function with respect to $z$, if $\hat h(z) = h(z)$.

The main novelty of our method, compared to the UMNN, is that we build a partial monotonic function, which allows us to approximate a quantile function. Further, the non-dependence of roots regarding $\tau$ (as it can be seen comparing Eq.~\ref{eq.nodes-values} and Eq.~\ref{eq.nodes-values-II}) ensures that, in these points, the quantile function is always partial monotonic (according to Section~\ref{subsect:Rate}).

On the whole, our goal is to show how the proposed solution solves the crossing quantile phenomenon while maintains (or even improves) the performance with respect to other existing alternatives. This can be extended to other fields where crossing quantiles is critical such as Reinforcement Learning (RL) \cite{Dabney_Rowland_Bellemare_Munos_2018,dabney18a,Yang_NEURIPS2019,Zhou_NEURIPS2020} as future work.

\section{EXPERIMENTS}\label{sec:exp}

In this section, we show the performance of the proposed model compared to the non-always monotonic approaches such as the IQN or the NAM. Furthermore, in the Appendix section, an always monotonous model based on restricting the weights - denoted as PCDN - is proposed and added too in the comparison. The goal is to verify that imposing the monotonicity constrain does not lead to a clearly worse predictive model.

In short, the different approaches analysed are:

\paragraph{The deep heteroscedastic Normal distribution (N)} The conditional normal distribution will be considered. Given that this quantiles are calculated from the parametric conditional distribution as $F(\tau, \boldsymbol{x})=\mu(\boldsymbol{x})+\sigma(\boldsymbol{x}) \sqrt{2} \cdot \text{erf}^{-1}( 2\tau - 1), \tau\in\left(0,1\right)$ where $\tau$ is the desired quantile and $\text{erf}^{-1}$, they will not cross. We will denote this approach as \textit{N}.

\paragraph{The Implicit Quantile Network (IQN)} Following Section~\ref{sec:introduction}, we are going to consider the IQN model. In particular, \citet{tagasovska2018frequentist} proposes to add a regularization term with respect to the derivative as $\max\left(-\frac{\partial \phi _w(\boldsymbol{x},\tau)}{\partial \tau}\right)$ to alleviate the crossing quantile phenomena. We will note this alternative optimisation process as \textit{IQN-D}. On the other hand, we will consider another regularisation term that penalises quantiles when crossing quantile phenomena appears as $\max(0.,\phi _w(\tau_1;\boldsymbol{x})-\phi _w(\tau_2;\boldsymbol{x}))$ for each $\tau_1<\tau_2$. We will refer to this approach as \textit{IQN-P}.

\paragraph{The Partial Constrained Dense Network (PCDN)} The main idea of that model is to force a \textit{selected} subset of \textit{weights} to be only positive combined with only considering ReLU activation functions \citep{glorot2011deep}. As it is described in the Appendix Section, the corresponding selected neurons will be increasing functions and can be used to produce a quantile function that is always partial monotonic.

\paragraph{The Not Always Monotonic model (NAM)} Similarly to our main proposed model, NAM has two different functions to optimise: The $\phi _w$ and the $K$. In that case, $K$ corresponds to the $q_0$ value as a general shift of all the quantiles. The predicted quantile function does not ensure that always will avoid crossing quantile phenomenon but will be added to the comparison to analyse their difference.

\paragraph{Our model (Our)} The main proposal of the article can be defined as a partial monotonous function. All alternative selections of $C_0$ described in Section~\ref{subsec:ConstantChoice} are considered for the following comparisons.

\subsection{Data Sets and Experiment Settings}

All experiments are implemented in TensorFlow \citep{tensorflow2015-whitepaper} and Keras \citep{chollet2019keras}, running in a workstation with Titan X (Pascal) GPU and GeForce RTX 2080 GPU. To ensure the strictly positive output values of $\phi _w$ in the proposed model, the final output will have a softplus function \citep{zheng2015improving} with a certain shift, in particular $\phi _w(\tau; \boldsymbol{x})=10^{-3} + \texttt{softplus}(NN(\tau; \boldsymbol{x})+10^{-5})$ where $NN(\tau; \boldsymbol{x})$ is the output of the neural network.  In addition, according to the notation of Section \ref{PCDN}, to force the \textit{selected} weights to be positive we will apply to it a ReLU function \citep{glorot2011deep} to their values. All internal activation functions will be ReLU. Furthermore, all experiments will be trained using an early stopping training policy with $200$ epochs of patience for all compared methods. All the details of the data sets used are detailed in the Appendix Section.

\subsection{Experimental Results}

Quantile regression models fit the conditional density distribution $p(Y\mid\boldsymbol{X})$ in a discrete manner. Thus, we want to study the performance between the different models in order to produce that distribution. However, not all models proposed in this work ensure monotonicity. Therefore, we will decide a common procedure to generate the forecasted likelihood for all of them to perform the following comparison. We calculate the number of predictive quantiles that fall within a discretization of the response space to predict for a thousand of equidistant $\tau\in(0,1)$ points (as it is detailed in the Appendix section).

\paragraph{Evaluating the Number of Crossing Quantiles} In Table \ref{tab:CCs} we show the minimum and maximum number of crossing quantiles taking into account all the folds. We can see that effectively, the normal distribution does not have crossing quantiles due to it comes from the quantile function formula. The PCDN never has any crossing quantile as stated in Appendix section \ref{PCDN}. Finally, as it is shown in Section~\ref{subsec:Mono}, all the main proposed models (Our-$q_0$ and Our-Mean) do not have any crossing quantiles in the root quantile values.

\paragraph{Log-Likelihood Estimation} In Table \ref{tab:UCIlogs}, we compare the log-likelihood adaptation for all the QR-based presented models for the eight different UCI problems. Each position is reporting the mean and standard deviation over the $20$ splits previously defined in \citet{Hern}. We observe that the first five options are far from our proposed model in terms of likelihood, that is the best. Additionally, we show that PCDN and NAM are not significantly worst than IQN. However, we need to take into consideration that PCDN and our proposed model are the only presented options that we can be sure that they are monotonic functions when are evaluated in the desired root quantiles.

\section{CONCLUSIONS}
\vspace{-1em}
Quantile regression is an approach to estimate the conditional density distribution of the response variable in a discrete manner. However, when a single model tries to approximate several quantiles at once the order between them matters. When they are not predicted in an increasing way, then the predicted distribution is invalid. This phenomenon is called ``crossing quantile''. In this work, we introduce a method that solves this issue as shown in Figure~\ref{fig:InitialFigure}.

In particular, the proposed method uses a deep learning model to approximate the partial derivative of an ending quantile function with respect to the quantile input. This partial derivative is imposed to be positive, thus, the ending function will be partially monotonic as desired. In Section \ref{sec:imposepositive}, we described how to calculate this derivative by using a Chebyshev Polynomial approximation ensuring the monotonicity of the quantile function up to any desired precision.

Having computationally guarantees of the monotonicity of these models, our final goal was to verify if the imposed restrictions would adversely affect the performance of the model compared with a non-ensured quantile regression model. As we saw in Section \ref{sec:exp}, the proposed models outperforms in terms of likelihood estimation and yields non-quantile crossing forecasts with respect the other compared models.

The proposed model constitutes a generic deep learning wrapper for any architecture to build a partial monotonic quantile function avoiding the crossing quantile phenomenon.

\section*{Acknowledgement}

We gratefully acknowledge the Industrial PhD Plan of Generalitat de Catalunya with BBVA Data and Analytics for funding this research. J.G. has been supported by NextGenerationEU within the Spanish
national Recovery, Transformation and Resilience plan. The UB recognizes that part of the research described in this chapter was partially funded by TIN2015-66951-C2, SGR 1219.

\bibliography{named1325}

\appendix

\clearpage

\section{PARTIAL CONSTRAINED DENSE NETWORK} \label{PCDN}

PCDN has been used as reference method to be compared in the experiments, Section~\ref{sec:exp}. Although it is not in the literature, PCDN is a natural approach to be considered in order to address the problem of crossing quantiles because it considers a classical approach of constraining the weights and activation functions to ensure that fully-connected neural network layers are monotonic. As an extension to this, PCDN constitutes a combination of partial monotone layers to ensure that the global model is partially monotone with respect to the quantile input as we will detail hereafter.

\begin{figure}[h]
    \centering
    \includegraphics[scale=.6]{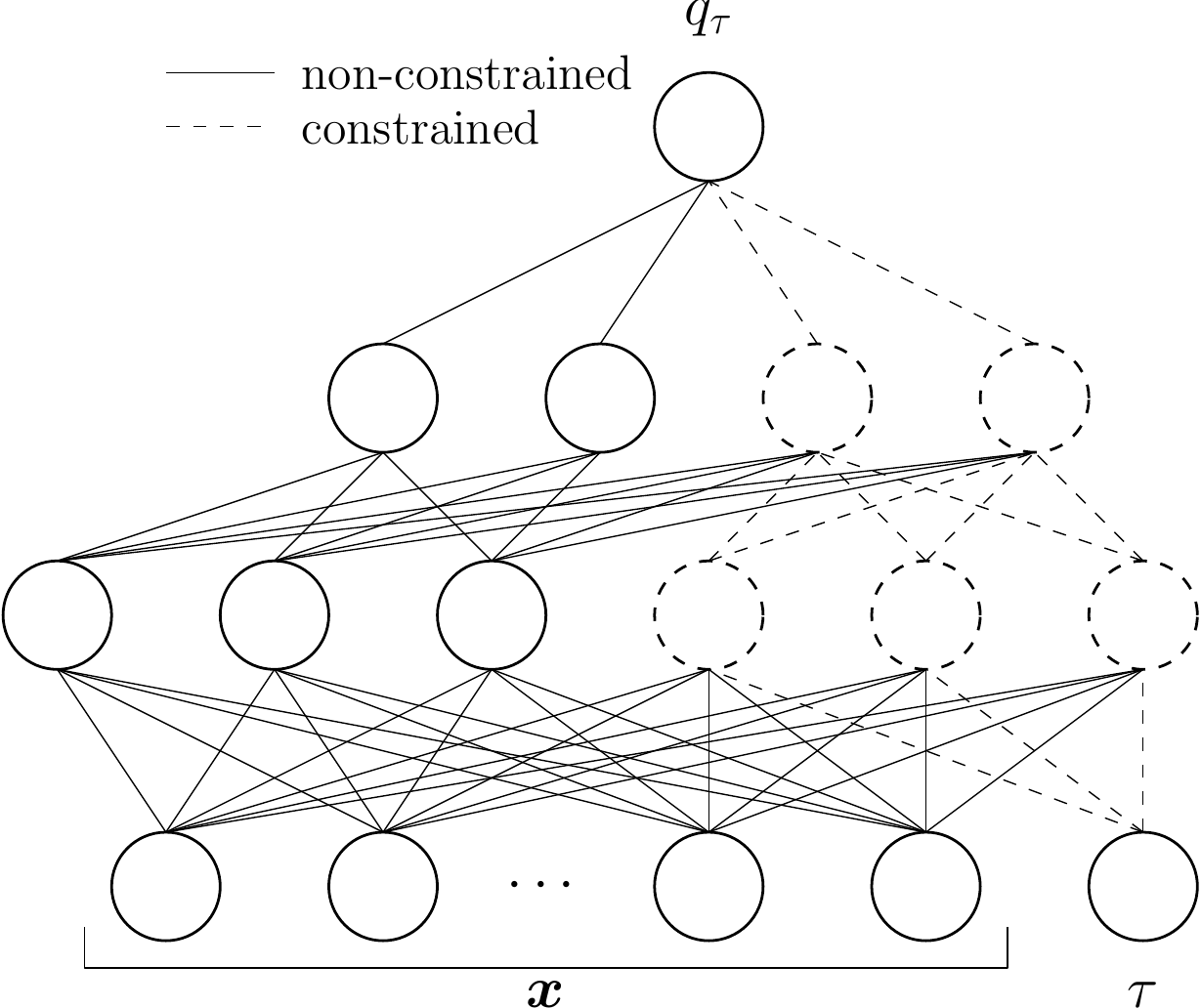}
    \caption{\small PCDN scheme.}
    \label{fig:pcdn}
\end{figure}

The construction of neural networks with a monotonously increasing function of the inputs is a problem that has been identified in the literature, see, for instance, \citet{sill1998monotonic}. These solutions were designed for completely dense models (i.e. those where all the neural network layers are fully-connected). These approaches were based on the following principle: a dense model with positive weights and with monotonically increasing activation functions gives a monotonously increasing function with respect to the input. Although, it is a very easy implementation to carry out, it has been found that, at the practical level, the impossibility of having weights of different signs makes the learning process complicated \citet{Wehenkel2019}. 

According to the notation in the Section~\ref{sec:introduction}, to solve the crossing quantile phenomenon it is enough to define a monotonic function $f$ with respect to the input $\tau$ (that corresponds to the quantile value). 

To built the model proposed in this section, we start with an IQN model, $\psi \colon [0,1] \times \mathbb{R}^D \rightarrow \mathbb{R}$, with only fully-connected layers by default. 

Then, we select a desired part of the neurons for each hidden layer to ensure that $\psi$ is monotonic with respect to $\tau$; for instance, $50\%$ of the neurons in each layer. 

Next, we force that these selected neurons only have positive weights, their activation function are monotonically increasing, and they are only connected with the previous layer neurons that were also selected. 

Finally, in the first layer, we only connect the weights that start from the input $\tau$ to the previously selected neurons of the following hidden layer as it is shown in Figure~\ref{fig:pcdn}. We refer to these kind of models as Partial Constrained Dense Network (PCDN). 

On the one hand, the inputs $\boldsymbol{x}$ are connected to the output with restricted and non-restricted weights. 
On the other hand, the $\tau$ value is connected with the output of the model only with restricted neurons to ensure it is a partial monotonic increasing function with respect to $\tau$. By applying PCDN we restrict the number of constrained weights and neurons to only selected part of the neural network. This model could be considered as an extension of other proposed models in the literature such as Monotone Composite Quantile Regression Neural Network (MCQRNN) \citep{cannon2018non}, which predicts a fixed number of quantiles.

\section{DETAILS OF THE DATA SETS ON THE EXPERIMENTS} \label{sec:experiments}

In this section, we show the performance of the proposed models compared to the baseline IQN. The goal is to verify that imposing the monotonicity of the PCDN or the main proposed model does not lead to a clearly worse predictive model.

\paragraph{Synthetic Glasses Data set} The points in the back of the curves of Figure~ \ref{fig:InitialFigure} corresponds to the mixture of two distributions. 
On the one hand, a noisy evaluation of $5*\sin(x_i)+0.5 + \epsilon$ is done where each $x_i$ corresponds to evaluate $3000$ equidistant points such that $0 < x_i < 3\pi$. The random noise correspond to a $\epsilon\sim\text{Beta}{\left(\alpha=0.5, \beta=1\right)}$. 
On the other hand, the second distribution of the mixture is the evaluation of $3000$ points such that $\pi < x_j < 4\pi$ into $5*\sin(x_j)+0.5+\varepsilon$. This time, the noise is defined by $\varepsilon\sim-\text{Beta}{\left(\alpha=0.5, \beta=1\right)}$. 
After that, all these values are normalized taking into account the maximum value of them. The regression problem consists in predicting, given an assigned value $x\in[0,1]$, the corresponding $y\in\mathbb{R}$ generated as explained. These data have multi-modalities to encourage crossings of the predicted quantiles. The $50\%$ generated data was considered as test data, $40\%$ for training, and $10\%$ for validation.

The neural network architecture used for the $\phi _w$ of IQN and the $\phi _w$ and $K _w$ of our method consists of $4$ dense layers with output dimensions $120$, $60$, $10$, and $1$ respectively. Regarding the training time, all models took less than $4$ minutes to converge.

\paragraph{UCI data sets} In order to satisfy the goal of this section of checking the performance of IQN, PCDN, NAM and the main proposed models, we applied these models to $8$ different UCI Machine learning data sets \citep{Dua:2019}. These data sets are commonly used for various regression tasks. In particular, we used the $20$ splits proposed in \citet{Hern} and widely used in later works \citep{Gal2015DropoutB, lakshminarayanan2017simple}.

Regarding the trained models, all of them share the same architecture for all the models: a single dense hidden layer of $200$ neurons. However, as NAM and the main proposed model requires the architecture for $\phi _w$ and $K _w$, we decided to assign the half part of the neurons of the single hidden layer to each of them.

In addition, we will analyse the number of crossing quantiles that are obtained for each test set considering the $980$ quantiles $[ 0.010, 0.011, 0.012, \dotsc, 0.990 ]$.

\end{document}